\renewcommand{\vec}[1]{\mathbf{#1}}
\newcommand \vx {\vec{x}}
\newcommand\vp{\vec{p}}
\newcommand \vu {\vec{u}}
\newcommand \vq {\vec{q}}
\newcommand\prob{\mathbb{P}}
\newcommand\risk{\mathsf{risk}}
\newcommand\cost{\mathsf{cost}}
\newcommand\ptLineDistance{\mathsf{ptLineDistance}}
\newcommand\obstacleDistance{\mathsf{obstacleDistance}}
\newcommand\nextState{\mathsf{nextState}}
\newcommand\scr[1]{\mathcal{#1}}
\def\BibTeX{{\rm B\kern-.05em{\sc i\kern-.025em b}\kern-.08em
    T\kern-.1667em\lower.7ex\hbox{E}\kern-.125emX}}
\title{\sc Mathematical Models of Human Drivers \\ Using Artificial Risk Fields}
\author{Emily Jensen$^1$, Maya Luster$^2$, Hansol Yoon$^1$,  Brandon Pitts$^2$, and Sriram Sankaranarayanan$^1$.%
\thanks{This work was supported by the US National Science Foundation (NSF) under award numbers 1836900 and 1836952.}%
\thanks{$^{1}$ University of Colorado Boulder. Email: first.lastname@colorado.edu  }%
\thanks{$^{2}$ Purdue University. Email: first.lastname@purdue.edu}%
}
\begin{document}
\maketitle
\begin{abstract}
In this paper, we use the concept of  artificial \emph{risk fields} to predict how human operators control a vehicle in response to upcoming road situations. A risk field assigns a non-negative \emph{risk} measure to the state of the system in 
order to model how close that state is to violating a safety property, such as hitting an obstacle or exiting the road.  Using risk fields, we construct a stochastic model of the operator that maps from states to likely actions.  We demonstrate our approach on a driving task wherein human subjects are asked to drive a car inside a realistic driving simulator while avoiding obstacles placed on the road. We show that the most likely risk field given the driving data is obtained by solving a convex optimization problem. Next, we apply the inferred risk fields to generate distinct driving behaviors while comparing predicted trajectories against ground truth measurements. We observe that the risk fields are excellent at predicting future trajectory distributions with high prediction accuracy for up to twenty seconds prediction horizons. At the same time, we observe some challenges such as the inability to account for how drivers choose to accelerate/decelerate based on the road conditions. 
\end{abstract}

\section{Introduction}
\label{sec:Introduction}

We consider the  problem of systematically modeling human control actions inside an intelligent transportation system. Ideally, such a model would enable interpretable explanations of why human drivers make certain control decisions in a given situation. Moreover, a model of driver decisions should be able to capture the variation in human driving behavior and emulate qualitatively different driving behaviors. Such models of human drivers can be quite helpful in developing 
autonomous vehicles that behave in a predictable manner and are able to operate on roads with human-driven vehicles~\cite{Basu2017, Sun2018}. Furthermore,
driver models can potentially be used in applications such as run-time monitoring of human drivers to predict dangerous driving behaviors wherein the
actions of the driver are ``far away'' from those expected by our model.

In this paper, we consider probabilistic models of human actions by building upon the concept of \emph{artificial risk fields}.  Such risk fields map states of the system to non-negative risk values, wherein larger risk values imply the state is close to a violation.  The choice of a control action from a given state by the human operator follows from the risk model in a simple way: the probability that a given control action is chosen is proportional to the exponential of the risk at the state that is reached at a fixed \emph{preview time} by applying that action. We develop this idea in the context of human control of a car wherein the human operator is tasked with driving the car safely along a road while staying in the designated lane, and at the same time, avoiding obstacles placed on the road. We first show how a family of possible risk functions can be  formulated for such tasks, wherein each risk function is obtained by instantiating some unknown parameters to a specific values. We demonstrate how the risk function can yield a probability distribution over possible choices of control input that a human operator may select from a given state, assuming a fixed preview time. We also consider the problem of inferring risk functions from actual human operator data. In particular, we show that deriving maximum likelihood risk function parameters for a class of ``additive'' risk functions reduces to a convex optimization problem that can be solved to global optimum.

We evaluate the proposed framework on data collected from human drivers inside a simulated driving environment, wherein the humans are tasked to drive the vehicle along a fixed course while avoiding obstacles placed along the vehicle's path. Using data from six different drivers with up to four trials around the course for each  driver, we show that our approach can fit parameters for risk models in each case.  We explore the interpretation of these parameters showing how they predict qualitatively different behaviors. Next, we evaluate the ability of our model to predict future trajectories that are close to the ground truth trajectories. Here, we show that our model can provide very accurate predictions with errors that lie within a few meters for predicting the position $20$ seconds out into the future. However, at the same time our model is less accurate for predicting how drivers accelerate or decelerate over  different portions of their driving tasks. 

The main contributions in this paper are as follows:
(a) We formalize the risk field-based approach that has been proposed by many researchers in the past \cite{Rasekhipour2017, Gibson1938, Kadar2000, Kolekar2020Nature}. A key contribution lies in formalizing the driver model based on a risk field as a stochastic model and providing approaches to discovering model parameters from naturalistic data. (b) We instantiate our framework to a driving simulator-based study of human operators driving a vehicle around in a simulated course with obstacles. (c) Our empirical evaluation shows that risk field-based approach can provide reasonable predictions of future trajectories. (d) Finally, we systematically vary risk field parameters to generate distinct driver behaviors.
\section{Related Work}
\label{sec:RelatedWork}

Munir et al. \cite{Munir2013} discuss the main challenges facing feedback control with human-in-the-loop; in particular, they discuss the need for developing systematic models of human behavior. Previous approaches to modeling driver behavior rely on cognitive models of human information processing. Salvucci and Gray~\cite{Salvucci2004} exploit the tendency of a driver's gaze to fixate on a near and far point. Subsequent work by Salvucci~\cite{Salvucci2006} used models of human declarative and procedural knowledge in the ACT-R cognitive architecture~\cite{Anderson2004} to simulate steering angle and lateral position for navigating curves. Our work also  models human operator control choices in a systematic manner. The key differences are two-fold: our model predicts a distribution over possible control inputs rather than a fixed prediction based  on the state. Also, unlike the works mentioned above, we do not aim to model the mental processes that underlie the driver's decision making.

 Other work captures driver behaviors in a qualitative manner. For example, Zhang et al ~\cite{Zhang2010} characterized drivers as novices or experts using a pattern recognizer on steering inputs. Similarly, Filev et al \cite{Filev2009} used a rule-based system to classify drivers as cautious or aggressive based on the variation in their braking and acceleration behaviors. Finally, Wang et al~\cite{Wang2010} used k-means clustering to identify key characteristics of long-term driving behaviors such as prudence, stability, conflict proneness, and skillfulness. These approaches aim to develop driver profiles. Our approach can be interpreted similarly by examining the relative weights of the risk model components; we can additionally apply artificial risk fields as a generative model of future behavior.

Recent methods in modeling operator behavior are based on navigating ``interaction fields'' in the task environment ~\cite{Kadar2000}. Foundational work by Gibson et al~\cite{Gibson1938} hypothesizes that humans navigate a ``field of safe travel'' by evaluating possible paths based on subjective experience and objective physical limitations. In the recent work of Kolekar et al~\cite{Kolekar2020AppliedErgonomics}, participants in a driving simulator were asked to react to obstacles placed at varying positions relative to their vehicle. Based on recorded reactions, the authors constructed a ``driver's risk field'' surrounding the vehicle. In a subsequent work~\cite{Kolekar2020Nature}, they then quantified a driver's perceived risk as the product of their risk field and the cost of certain events (colliding with obstacles). This leads to  a controller which generates human-like behavior in a variety of scenarios when set to maintain risk under a certain threshold.
The motivation of this paper is similar to the work of Kolekar et al~\cite{Kolekar2020AppliedErgonomics, Kolekar2020Nature} in that we seek an interpretable and generative model of driving behavior grounded in theories of human reasoning and decision making. Our approach differs from the above work in several important respects. First, we define a risk field as a characteristic of the task environment and control inputs selected by the operator. The operator then stochastically navigates this risk space with the goal of minimizing risk. A second distinction is that because the risk fields presented here are defined in the task environment, they extend to other scenarios besides driving.

Our approach is closely related to \emph{inverse reinforcement learning} where the vehicle model and operator's actions are captured by a Markov Decision Process (MDP) model with unknown reward functions. The goal is to infer these unknown rewards either through solving an optimization problem~\cite{Ng+Russell/2000/Algorithms,ziebart2008maximum}
or through Bayesian methods~\cite{Ramachandran+Amir/2007/Bayesian}. There has been a long history of using inverse reinforcement to explain the actions of human operators inside a known environment~\cite{Baker+Saxe+Tenenbaum/2009/Action}. The recent work of Ozkan et al studies how inverse reinforcement learning can be used to learn a driver model that is able to predict lead vehicle following behaviors of human drivers in a 3D driving simulation environment~\cite{Ozkan+Others/2021/Inverse}. Our approach bears many similarities to inverse reinforcement learning: for instance, we can view risk fields as a (negative) ``reward'' function that the driver is minimizing. However, some key differences exist: we explicitly consider a ``preview time'' that the operator looks ahead into the future. This allows us to keep our risk functions simple since they apply to the state that is reached at some time in the future. Inverse reinforcement learning approaches compute rewards/risks that apply to the current state. This means that they have to consider more complicated functions than we do. As a result of our setup, we also have the benefit of solving a convex optimization problem and thus guarantee that we can compute the most likely model.

Rather than modeling the driver's risk perception and control choice, data-driven approaches such as Kim et al~\cite{Kim+Others/2017/Probabilistic} and Long et al~\cite{Long+Others/2018/Intention} train recurrent neural networks that input numerous features such as the vehicle's past trajectories and from its surrounding environment 
to predict the future trajectories of the vehicle. While these approaches are promising as predictors of future trajectories, they  require a larger volume of data to reliably train and test a recurrent neural network. It is often challenging to interpret variations between drivers or in general understand these models once they are trained. Nevertheless, data driven approaches have proven more versatile and capable of handling many more situations than our approach in this paper. Our work is currently aimed towards more narrowly defined  settings although we hope to generalize it in future iterations of our framework to handle a richer variety of driving scenarios. 
\section{Artificial Risk Fields}
\label{sec:ArtificialRiskFields}

We describe a general approach to defining human operator behavior using artificial risk fields. Subsequently, we will apply the framework to a driving task in Section~\ref{sec:DrivingTask}.

\subsection{Problem Formulation}
We consider a single vehicle inside a known environment of \emph{states} $\vx \in X$. The human driver's task is to control the vehicle from a starting configuration $A$ to a goal configuration $B$. Additionally, we designate a set of \emph{obstacles} $O_1, \ldots, O_m$; each obstacle $O_i \subseteq X$ represents an unsafe configuration. It may also be natural to specify a ``desired'' path $\pi$ that connects the start to the end state, that the vehicle should stay close to (Figure~\ref{fig:schematic-diagram}).

The vehicle is modeled by its \emph{dynamics}: $ \frac{d\vx}{dt} = f( \vx(t) , \vu(t))$, wherein $\vx(t)$ models the state at some time $t$ and $\vu(t) \in U$ models the action (control input) at time $t$ and $U$ is the set of actions available to the human driver. The function $f$ is assumed to be a fixed and known state update function.
Our approach makes some key assumptions about the behavior of the human operator:

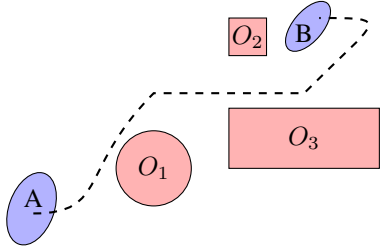
\begin{figure}[t]

\begin{center}
\begin{tikzpicture}

\filldraw[fill=blue!30, rotate=-20](-1,-2) ellipse (0.3 and  0.5);
\node at (-1.6,-1.4) {A};
\filldraw[fill=blue!30, rotate=-40](1,2) ellipse (0.2 and  0.4);
\node at (2,0.8) {B};

\filldraw[fill=red!30] (0, -1) circle(0.5);
\node at (0, -1) {$O_1$};
\filldraw[fill=red!30] (1, 0.5) rectangle (1.5, 1);
\node at (1.25, 0.75) {$O_2$};

\draw[thick, dashed, black] (-1.6, -1.6) .. controls +(1,0) and +(-1, -1).. (0,0) -- (2,0) .. controls +(1,1) and +(1, -0).. (2.2, 1);

\filldraw [fill=red!30] (1,-0.2) rectangle (3,-1);
\node at (2, -0.6) {$O_3$};
\end{tikzpicture}
\end{center}
\caption{Schematic diagram of the overall task setup showing initial set of states $A$, target set $B$, obstacles $O_i$ and optionally a reference path.}
\label{fig:schematic-diagram}
\end{figure}

\begin{compactenum}
    \item The operator knows the  state $\vx$, or at least, those state variables involved in choosing the control. 
    \item The operator model is Markovian --- i.e, the probability distribution depends on the current state $\vx$ and not necessarily on the path 
    taken to reach the state.
\end{compactenum}

Whereas the assumptions above are somewhat restrictive, we note that our goal is to build a model that predicts the operator's decision making 
rather than capture the mental processes involved in the decision making. 

The overall goal of this paper is to predict what control actions are likely to be chosen by the human operator at a given state. That is, we seek to model the probability distribution of $\prob(\vu | \vx)$ that an action $\vu$ that is chosen  by the human operator at state $\vx$. Our model makes the following assumptions about the operator's control selection strategy:
\begin{compactenum}
    \item Each state $\vx \in X$ is associated with a non-negative \emph{risk value} $\risk(\vx; \vp)$ which provides an aggregate numerical score, wherein $\vp$ denotes a set of parameters that may be specific to an individual operator at a given time.  The higher the risk score associated with a state, the closer it is to being a property violation such as entering a forbidden obstacle region or deviating too far from a desired path.
    \item The operator plans ahead to some ``preview'' time $\delta_p > 0$ into  the future.
    \item The operator's decision making balances two factors: the risk of the future state that would be reached if a particular control were chosen against the magnitude of the control input. Thus, the operator would prefer not to apply extreme values of brakes/acceleration or steering inputs
    while at the same time they would prefer to stay away from obstacles and close to the center of their designated lane. 
\end{compactenum}

We will first describe each component of the model starting with the risk function. Next, we will describe how the overall probability distribution is defined.

\subsection{Risk Function}

The risk may be defined by many factors including the proximity of the state to various obstacles and the deviation of the state from the desired path $\pi$.  The risk function $\risk(\vx; \vp)$ is given by:
\[  \sum_{j=1}^m p_j \mathsf{obstacleRisk}(\vx, O_j) + p_{m+1} \mathsf{deviationRisk}(\vx, \pi) \,,\]
wherein  $\mathsf{obstacleRisk}(\vx, O_j)$ is a function that measures the
risk connected with the state $\vx$ being inside (or close to) the obstacle $O_j$,
and $\mathsf{deviationRisk}(\vx, \pi)$ measures the risk arising from the state $\vx$ being far way the reference path $\pi$ (if one is given in the problem formulation). In general, any function that ensures that the risk is monotonically decreasing as one moves away from the obstacle can be chosen. Similarly, $\mathsf{deviationRisk}(\vx, \pi)$ will be $0$ if $\vx$ lies on the reference path, and increases monotonically as the distance from the state $\vx$ to the reference path $\pi$ increases.

Finally, we note that the parameters $\vp: (p_1, \ldots, p_{m+1})$ are non-negative weights that model the relative weightages associated with avoiding various obstacles and being close to the reference path. The choice of these parameters will affect the nature of the risk function. In Section~\ref{sec:ModelFitting}, we demonstrate how parameters for a risk model are chosen given observed experimental data.

\subsection{Overall Operator Model}

The next component of the risk model concerns the assumption of a preview time.
Let $\vx$ be a current state and $\vu$ be a  control action under consideration. We assume that the operator computes the state $\vx'$ at some fixed time $\delta_p > 0$ in the future. In other words, let $\vx'(\vu, \delta_p)$ be the state that results at time $t + \delta_p$ if the control action $\vu$ were chosen at time $t$ and held constant.
Also, we associate a non-negative cost to each control action $\vu$ denoted by $\cost(\vu; \vq)$. Once again, the cost model can be parameterized by a set of unknown parameters $\vq$ that will be estimated from experimental data.

The operator model we formulate assumes that
\[ \prob(\vu | \vx) \propto \exp( - \risk(\vx'(\vu, \delta_p); \vp) - \cost(\vu; \vq)) \,.\]
Suppose the set of possible actions $U$ is a finite set $\{ \vu_1, \ldots, \vu_N\}$, then we write the exact expression as Eq.~\eqref{eq:operator-choice-distribution}. The denominator normalizes the probability over all actions. For continuous set of control actions, we can replace the summation by an integral over the set $U$. Doing so, 
we obtain the following expression for $\prob(\vu | \vx)$:
    \begin{equation}
    \label{eq:operator-choice-distribution}
\frac{\exp( - \risk(\vx'(\vu, \delta_p); \vp) - \cost(\vu; \vq) )}{ \sum_{j=1}^N \exp( - \risk(\vx'(\vu_j, \delta_p); \vp) - \cost(\vu_j; \vq) )} \,.
    \end{equation}

The operator model implicitly assumes that (a) the operator can forecast a future state $\vx'(\vu)$ some time $\delta_p$ in the future as a result of a control input $\vu$; and (b) chooses control actions which yield future states with lower risk + cost values  preferentially over those with higher risk + cost values.
\section{Driving Task}
\label{sec:DrivingTask}
We describe the driving task that will be the central case-study to motivate our work and develop a risk field model specific to the task.

\subsection{Task Description}
The driving task is performed in a medium-fidelity driving simulation environment developed by the National Advanced Driving Simulator (NADS miniSim) at Purdue University~\cite{NADS}. The system includes three high resolution monitors for displaying the driving environment and a smaller monitor for the vehicle dashboard display. The user controls a steering wheel and foot pedals for acceleration and braking as in a standard automobile (Figure~\ref{fig:driving_task}, left).

\begin{figure*}
\begin{center}
\includegraphics[width=8cm]{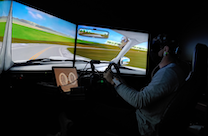}
\includegraphics[width=8cm]{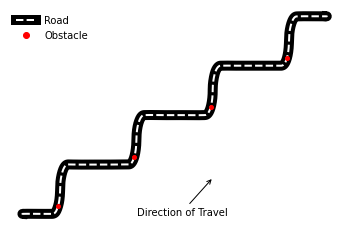}
\end{center}
\caption{(Left) Picture of the NADS miniSim setup showing a participant driving along a course (daytime simulation), (Right) plot of the centerline of the simulated course showing obstacle placement as red circles. }\label{fig:driving_task}
\end{figure*}

\emph{Driving Scenario.} The driving scenario consists of driving the simulated vehicle at night time on a two lane city highway with four obstacles placed along the route. Illumination using street lights was present. The overall simulated driving course distance was roughly 4.8 km (3 miles). To increase the difficulty of the task, participants were asked to drive one handed with their non dominant hand. There were no oncoming, leading, or trailing vehicles. The obstacles were placed so that they were visible only after the participant rounded the curve (Figure~\ref{fig:driving_task}, right). 

The objectives for the human driver are as follows:
\begin{compactenum}
\item The operator must practice safe driving by keeping within their lane and minimizing deviations. They must never exit the paved road.
\item Obstacles (a tire) placed in the operator's lane are to be avoided.
\item Vehicle speed is to be maintained as close as possible to $45$ mph ($\approx 20$ m/s) at all times.
\end{compactenum}

\emph{Participants.} The study was conducted with six participants (3 male, 3 female) with a mean age of 21.33 years (SD = 0.82). Participants were all undergraduate students at Purdue University~\footnote{This study was approved by Purdue IRB number 1905022220}, and were all engineering senior undergraduate students. On average, the participants had 4.2 years of driving experience, with all of them reporting having driven 10K or more miles per year, on average. The participants were allowed to practice driving the vehicle on the simulator using a daytime practice course that involved an open highway.

\emph{Data Collection.} Each participant drove the course over three (or in one case, four) separate trials, yielding nineteen separate trials for the six participants, in total. Data collected includes  the position, velocity, heading angle, steering wheel position, accelerator/brake pedal positions sampled at $60$ Hz.

\subsection{Risk Field Formulation}
\label{subsec:risk-field-driving}
We will now derive risk models for the human driving task. First, we will describe a simple unicycle model for the vehicle's dynamics. This model is appropriate since effects such as cornering over tight turns, wheel slip and skids are not important for the speed and road conditions that were simulated in the study. The state of the vehicle is described by $\vx: (x, y, v, \psi)$, wherein $x, y$ denote the position in a fixed coordinate frame, $v$ describes the velocity of the vehicle and $\psi$ is the heading angle. The control inputs are $u_1$: the acceleration (or deceleration) and $u_2$: the turning rate.
The dynamics are described by the ODEs:
\begin{equation}\label{eq:unicycle-model}
 \left. \begin{array}{rl  c rl }
\dot{x} &= v \cos(\psi) & \hspace*{1cm} & \dot{y} &= v \sin(\psi)\\
\dot{v} &= u_1 && \dot{\psi} & = u_2 \\
\end{array} \right\}
\end{equation}

We define the function $\ptLineDistance( (x, y), C)$ as the Euclidean distance from a given position $(x,y)$ to the nearest point in the center-line $C$. 

Similarly, we are given a list of obstacle positions $O: [ (x_{o,1}, y_{o,1}), \cdots (x_{o,4}, y_{o,4})]$.
Each obstacle has a fixed diameter $d_o = 0.3$ meters. We define the function $\obstacleDistance((x,y), O)$ as the Euclidean distance from a given position $(x,y)$ to the obstacle that will be encountered next in the vehicle's direction of travel.

The overall risk for a given state $\vx: (x, y, v, \psi)$ and control $\vu$ is given by $\risk(\vx)$:
    \begin{equation}
    \label{eq:vehicle-model-risk}
 \risk(\vx):\ \left\{ \begin{array}{ll}
 \textbf{A}\cdot {\ptLineDistance( (x,y), C)}^2 + \\
 \textbf{B}\cdot \exp\left( - \frac{{\obstacleDistance((x,y), O)}^2}{d_o^2}\right) + \\
 \textbf{C}\cdot (v - v_{tgt})^2  \end{array}\right.\,.
    \end{equation}
and the cost of the control input is given by $\cost(\vu)$:
    \begin{equation}\label{eq:vehicle-control-cost}
 \cost(\vu):\ \textbf{D}\cdot u_1^2 + \textbf{E}\cdot u_2^2 \,.
    \end{equation}

Here $\mathbf{A}, \ldots, \mathbf{E} \geq 0$ are unknown parameters whose values will determine the actual tradeoffs that the driver makes while staying in their lane and avoiding the obstacles during the execution of the task. 

We consider control inputs $u_1 \in \{ -1, -0.9, \cdots, 0.9, 1\}$ (units are $\mathtt{m}/\mathtt{s}^2$) and
$u_2 \in \{-0.5, -0.45, \cdots, 0.45, 0.5\} $ (units are \texttt{radians/s}), yielding 400 discrete choices
for $(u_1, u_2)$. For a given state $\vx$, the probability of control inputs $(u_1, u_2)$ being chosen
$\prob((u_1, u_2)\ |\ \vx)$ is described once again by Eq.~\eqref{eq:operator-choice-distribution}.
Here we define the risk and costs by Eqs.~\eqref{eq:vehicle-model-risk} and ~\eqref{eq:vehicle-control-cost}.
The next state $\vx'(\vu, \delta_p)$ is obtained by simulating the ODE in Eq.~\eqref{eq:unicycle-model}.

\SetKwComment{Comment}{/* }{ */}
\begin{algorithm}[t]
\caption{Algorithm for sampling a trajectory  given risk and cost functions, initial states.}\label{alg:sampling-algorithm}
{\footnotesize 
\KwData{ $\risk, \cost$: risk/cost functions, $\vx_0$: Initial State, $\delta_p$: preview time, $\delta$: time step, $n_s$: number of simulation steps, $U: \{ \vu_1, \ldots, \vu_N\}$ all control inputs}
\KwResult{Sample Trajectory: $\vx(0), \ldots, \vx(n_s \delta)$}

$\vx(0) \ \leftarrow\ \vx_0$\;
\For{$s\ \leftarrow\ 1, \cdots, n_s$}{
   \For{ each $\vu_j \in U$}{
      \Comment{Simulate until the preview time.}
      $\vx_j'\ \leftarrow\ \nextState(\vx(\delta (s-1)), \vu_j, \delta_p)$\;
      \Comment{Calculate Risk.}
      $p(\vu_j)\ \leftarrow\ \exp( -\risk(\vx'_j; A, B, C) - \cost(\vu_j; D, E) ) $\;
   }
   \text{sample } $\vu \in U$ with probability $p(\vu)/\sum_{k=1}^N p(\vu_k)$\;
 $\vx(\delta s)\ \leftarrow\ \nextState(\vx, \vu, \delta)$ \Comment*[r]{State for $ \delta  s$}
}
}
\end{algorithm}
Algorithm~\ref{alg:sampling-algorithm} shows the overall algorithm for sampling a
trajectory from the risk model.
\section{Model Fitting}
\label{sec:ModelFitting}
\subsection{Maximum Likelihood Estimation}
In this section, we consider how to infer a risk field given data in the form
of states $\vx(t)$ and controls $\vu(t)$. We will assume that the risks and costs
are \emph{additive} over component functions as follows:
\begin{equation}\label{eq:risk-model-form}
\risk(\vx; \vp):\ \sum_{j=1}^m p_j f_j(\vx),\   \ 
\cost(\vu; \vq):\ \sum_{i=1}^l q_i g_i(\vu) \,.\\
\end{equation}
Note however, that we do not assume much for functions $f_j, g_i$ other than that they are non-negative
and well-defined over the relevant values of $\vx, \vu$.
The parameters for risk and cost functions are collected as a vector
$(p_1, \ldots, p_m, q_1, \ldots, q_l)$.
 Assuming that the controls are chosen from a finite set $U: \{ \vu_1, \ldots, \vu_N \}$,
fixing $\delta_p$ to be the preview time and $\nextState(\vx, \vu, \delta_p)$ being the state reached
starting from current state $\vx$ if control $\vu$ is applied for time $\delta_p$.
Recall that the model chooses a control input $\vu$ for a  state $\vx$ in proportion to the
risk and cost according to Eq.~\eqref{eq:operator-choice-distribution}.

\begin{figure*}[b!]
    \begin{equation}
    \label{eq:concave}
    \footnotesize 
    \log \prob(\vu | \vx):\ \left\{ \begin{array}{l}
    - \sum_{j=1}^m p_j f_j(\vx'(\vu)) - \sum_{i=1}^l q_i g_i(\vu)  - \log \left(\sum_{k=1}^N \exp\left( - \sum_{j=1}^m p_j f_j(\vx'(\vu_k)) - \sum_{i=1}^l q_i g_i(\vu_k)\right)\right) \\
    \end{array}\right.
    \end{equation}
\end{figure*}
Let us assume that we are given driving data of the form $(\vx(t_i), \vu(t_i))$ consisting of states and controls applied at various times $t_i$ for $i = 1, \ldots, M$. Our goal is to find risk parameters $\vp, \vq$ for Eq.~\eqref{eq:risk-model-form} that maximizes the overall log-likelihood $\scr{L}(\vp, \vq):\ \sum_{i=1}^M \log \prob( \vu(t_i) | \vx(t_i) )$, wherein $\prob(\vu(t_i) | \vx(t_i))$ is as given in Eq.~\eqref{eq:operator-choice-distribution}.

Note that if the risk and cost models are additive as in Eq.~\eqref{eq:risk-model-form}, then the overall log-likelihood $\scr{L}(\vp, \vq)$ is a \emph{concave function} for a fixed value of $\delta_p$. This  means that we can solve the maximization problem of a concave function (or alternatively minimization of a convex function) to obtain a global optimum using standard off-the-shelf convex optimization tools\cite{Boyd+Vandenberghe/2004/Convex}.

\newtheorem{theorem}{Theorem}
\begin{theorem}
    If the risk and cost models are additive as in Eq.~\eqref{eq:risk-model-form}, then the overall log-likelihood $\scr{L}(\vp, \vq)$ is a \emph{concave function} for a fixed value of $\delta_p$.
\end{theorem}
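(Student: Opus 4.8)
The plan is to exploit the explicit form of the per-sample log-probability in Eq.~\eqref{eq:concave} and reduce the claim to the standard fact that the log-sum-exp function is convex. Since $\scr{L}(\vp,\vq)$ is a finite sum of terms $\log\prob(\vu(t_i)\mid\vx(t_i))$, and concavity is preserved under finite sums, it suffices to show that each summand is concave in $(\vp,\vq)$. I would therefore fix a single data point $(\vx,\vu)$ and analyze the corresponding expression in Eq.~\eqref{eq:concave}.

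First I would observe that the two leading terms, $-\sum_{j=1}^m p_j f_j(\vx'(\vu))$ and $-\sum_{i=1}^l q_i g_i(\vu)$, are affine (indeed linear) in the parameter vector $(\vp,\vq)$, because once $\delta_p$ and the data are fixed the quantities $f_j(\vx'(\vu))$ and $g_i(\vu)$ are constants that do not depend on the parameters. An affine function is concave, so this part presents no difficulty, and notably its concavity does not even require the non-negativity of $f_j,g_i$. The entire burden then rests on the normalizer term $-\log\!\left(\sum_{k=1}^N \exp(z_k(\vp,\vq))\right)$, where $z_k(\vp,\vq) = -\sum_{j=1}^m p_j f_j(\vx'(\vu_k)) - \sum_{i=1}^l q_i g_i(\vu_k)$.

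The key step is to note that each $z_k$ is itself affine in $(\vp,\vq)$ for exactly the same reason, and that the map $(z_1,\dots,z_N)\mapsto \log\sum_k e^{z_k}$ is convex. I would establish the latter either by citing the standard result~\cite{Boyd+Vandenberghe/2004/Convex} or by computing the Hessian directly: writing $\pi_k = e^{z_k}/\sum_\ell e^{z_\ell}$, the Hessian equals $\operatorname{diag}(\pi)-\pi\pi^{\top}$, and for any vector $v$ one has $v^{\top}(\operatorname{diag}(\pi)-\pi\pi^{\top})v = \sum_k \pi_k v_k^2 - \left(\sum_k \pi_k v_k\right)^2 \ge 0$ by Jensen's inequality, so the Hessian is positive semidefinite. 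Because the composition of a convex function with an affine map is convex, $\log\sum_k e^{z_k(\vp,\vq)}$ is convex in $(\vp,\vq)$, and hence its negation is concave.

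Finally I would assemble the pieces: each summand of Eq.~\eqref{eq:concave} is the sum of an affine (hence concave) term and a concave term, so it is concave, and summing over the $M$ data points preserves concavity, yielding that $\scr{L}(\vp,\vq)$ is concave. I expect the only nontrivial point — and the one I would state with care — to be the convexity of log-sum-exp together with the fact that affine pre-composition preserves it; everything else is bookkeeping. A subtlety worth flagging, which is why the statement fixes $\delta_p$, is that the argument breaks down if $\delta_p$ is treated as a variable, since $\vx'(\vu,\delta_p)$ depends on $\delta_p$ through the dynamics in a generally nonconvex way, so $f_j(\vx'(\vu,\delta_p))$ need not be affine in $\delta_p$.
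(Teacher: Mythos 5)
Your proposal is correct and follows essentially the same route as the paper's own proof: each per-sample term in Eq.~\eqref{eq:concave} decomposes into an affine (hence concave) part minus a log-sum-exp of affine functions of $(\vp,\vq)$, which is convex, and concavity of $\scr{L}$ follows by summing. Your explicit Hessian verification of log-sum-exp convexity and the remark about why $\delta_p$ must be fixed are welcome elaborations of steps the paper handles by citation to~\cite{Boyd+Vandenberghe/2004/Convex}, but they do not change the argument.
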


\begin{proof}
    The proof consists in observing that $\log \prob(\vu | \vx)$ is concave function of $\vp, \vq$. Let $\vx'(\vu)$ denote the value of $\nextState(\vx, \vu, \delta_p)$. Expanding Eq.~\eqref{eq:operator-choice-distribution} using the form of the risk model in ~\eqref{eq:risk-model-form}, we obtain an expression for $\log \prob(\vu | \vx)$ in Eq.~\eqref{eq:concave}
    
    Since $\vx, \vu$ are given to us in the data, the terms $f_j(\vx'(\vu))$ and $g_i(\vu)$ are all fixed constants.
    Thus, as a function of $\vp, \vq$, we note that $\log \prob(\vu | \vx) $ is the difference of a linear function over
    $\vp, \vq$ and the log-sum-exp of linear function over $\vp, \vq$. This is a difference of a concave function and
    a convex function, which is itself concave.The overall likelihood is the sum of concave functions, and is concave.
\end{proof}

\subsection{Fitting Parameters From Obstacle Avoidance Data}
\label{subsec:fitting-parameters}
In this section, we report on the application of the maximum likelihood minimization approach to the data obtained from six human drivers in the NADS vehicle simulator, as described in Section~\ref{sec:DrivingTask}.

We recall that each participant drove along a road with obstacles placed at periodic intervals. In particular, each ``trial'' by a participant involved four encounters with the obstacle. We will fit the risk model parameters using the data from each obstacle, using the \verb|scipy.optimize| module for various values of $\delta_p \in \{ 0.6, 0.8, 1.0, 1.2 \}$ seconds. The risk functions used are described in Section~\ref{subsec:risk-field-driving} and in particular Eqs.~\eqref{eq:vehicle-model-risk} and~\eqref{eq:vehicle-control-cost}. This yielded $19 ~\text{trials}\times 4~\text{obstacles} = 76$ fit models for each of the four $\delta_p$ values.

For each obstacle encounter, we selected the preview time $\delta_p$ which maximizes the likelihood of the data. Of the four $\delta_p$ values considered, 94.7\% of fitted models achieved a maximum likelihood using $\delta_p = 1.2$ seconds. Thus, for a car driven at $20m/s$, the preview distance is $24$ meters.

Table~\ref{tab:1-obs-params} shows the distribution of fit parameters as the median value as well as extremely low (5th percentile) and extremely high (95th percentile) values.
We see that each parameter takes on a different range of values, with the parameter $C$ (associated with staying close to the target velocity of $20 m/s$) varying very little ($0 - 0.025$), whereas $B$  (associated with the weightage placed on obstacle avoidance) encompasses a wide range ($0 - 110.86$).

\begin{table}[t]
\caption{Distribution of parameters fit around one obstacle, using the best preview time $\delta_p$. There were 76 total fit models.}
\label{tab:1-obs-params}
\begin{tabular}{l|rrrrr}
Quantile & \multicolumn{1}{c}{A} & \multicolumn{1}{c}{B} & \multicolumn{1}{c}{C} & \multicolumn{1}{c}{D} & \multicolumn{1}{c}{E} \\
\hline
5\%  & 0.248 & 0.000 & 0.000 & 0.000 & 14.233\\
50\% & 0.544 & 16.349 & 0.000 & 1.416 & 40.782\\
95\% & 0.939 & 110.864 & 0.025 & 11.827 & 99.543
\end{tabular}
\end{table}
\section{Evaluating Driver Models}

We first provide a preliminary analysis to evaluate the accuracy of our method for predicting driver trajectories. Of the 19 initial recorded course trials, we removed any trial where the driver collided with an obstacle, yielding 17 successful trials. For each successful trial, we fit risk field parameters using the formulation in Section~\ref{subsec:risk-field-driving} and the driver data from the first two obstacles in the trial. Using these parameters, we used Algorithm~\ref{alg:sampling-algorithm} to generate 100 trajectories for the held out data of the last two obstacles in the trial. We used a preview time $\delta_p = 1.2$ for all of the trajectories based on the analysis from Section~\ref{subsec:fitting-parameters}.

To define a single trajectory for comparison with the actual driver behavior, we took the median $x$ and $y$ value over the 100 trajectories at each time point. We then defined the divergence of the generated trajectory as the distance from the median point to the line created by the human trajectory. Table~\ref{tab:dist-from-human-trajectory} shows the minimum, median, and maximum divergence from the human trajectory at times $t\in\{1, 2, 5, 10, 20\}$ seconds from the initial position.

\begin{table}[t]
\caption{Deviation (meters) of generated trajectory from actual human trajectory, across all successful course trials. Results are reported at different times from the starting position (seconds)}
\label{tab:dist-from-human-trajectory}
\begin{tabular}{l|rrrrr}
 & \multicolumn{1}{c}{1s} & \multicolumn{1}{c}{2s} & \multicolumn{1}{c}{5s} & \multicolumn{1}{c}{10s} & \multicolumn{1}{c}{20s} \\
\hline
min    & 0.002 & 0.051 & 0.002 & 0.180 & 0.104\\
median & 0.064 & 0.234 & 0.493 & 1.114 & 0.764\\
max    & 0.245 & 0.944 & 1.655 & 2.373 & 1.482
\end{tabular}
\end{table}

Table~\ref{tab:dist-from-human-trajectory} shows that, like one would intuitively expect, the deviation from the actual human trajectory increases over time, except between 10 and 20 seconds. The change in deviations may be a function of the course characteristics (e.g., rounding a turn) and also show that our model can self-correct based on the high-level priorities defined in the risk field model. Additionally, these results show that the model is able to predict the future position 20 seconds ahead with an error of less than 3 meters. This is promising, given that the lane width in the driving task was 3 meters.

Figure~\ref{Fig:trajectory-velocity-comparison} shows sample $(x,y)$ trajectories predicted by our model and velocities over time for three separate initial conditions drawn from the actual driver  data. We also plot the actual ``ground-truth'' data for each of these situations. It is interesting to see that the simulated $(x,y)$ trajectories are viable trajectories that keep close to the center line while avoiding obstacles. In the bottom row of Figure~\ref{Fig:trajectory-velocity-comparison}, we see that the predicted velocity deviates from the true human driver velocity by as much as 4 m/s, especially in cases where the driver accelerates swiftly.  The mean absolute difference in predicted velocity versus actual velocities are $0.1$ m/s for predicting $1$ seconds out into the future, $1.3$ m/s for $10$ second prediction horizons and $2.5$ m/s for $20$ second horizon. However, we also observe that our model has the tendency to under-estimate the actual velocity around sharp turns: it is likely that the driver allows the vehicle to move towards the edge of their lane to reduce steering effort and allow themselves to accelerate.  We conclude that the participants do not prioritize the instruction to maintain their velocity around 20 m/s, while focusing more on maintaining their lane position and avoiding obstacles. Modeling their choice of velocities requires considerations that are subtly different from their perceived risk such as their self-confidence.
\section{Characterizing Driver Behavior}
\label{sec:DriverBehavior}
\begin{figure*}[t]
\begin{center}
\begin{tabular}{ccc}
\includegraphics[width=5.5cm]{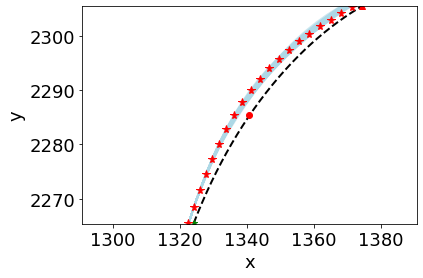} &
\includegraphics[width=5.5cm]{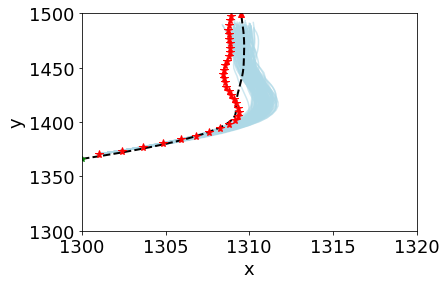} &
\includegraphics[width=5.5cm]{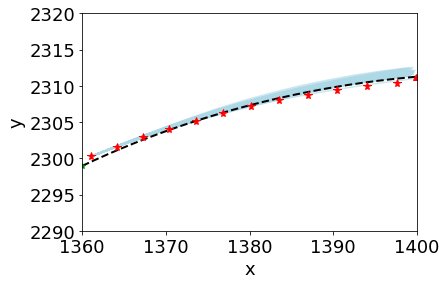}\\
\includegraphics[width=5.5cm]{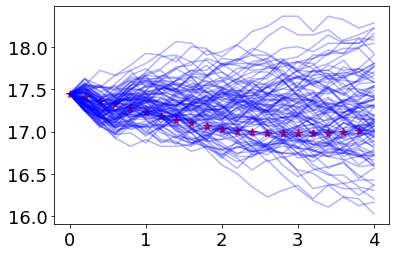} &
\includegraphics[width=5.5cm]{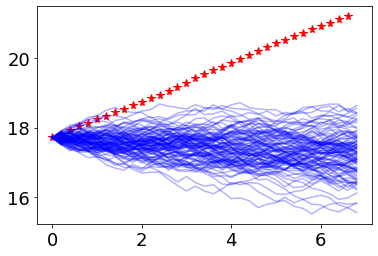} &
\includegraphics[width=5.5cm]{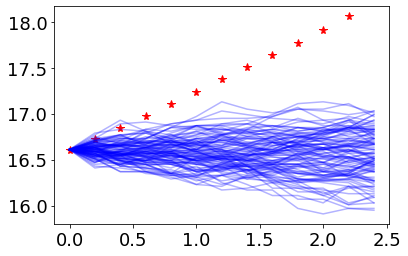}\\
\end{tabular}
\end{center}
\caption{ (Top Row) Sample $(x,y)$ trajectories generated by the risk model against
ground truth shown by red stars with centerline shown as a dashed black line and obstacle shown as red circle. \textbf{Warning:} $x$ and $y$ axes are drawn to different scales.
(Bottom Row) Corresponding velocity (m/s) values over time against ground truth. }\label{Fig:trajectory-velocity-comparison}
\end{figure*}

Showing the overall accuracy of our model, we reach the main research question, do the risk model parameters account for different types of obstacle avoidance behavior? To answer this, we will visualize generated trajectories using different parameter configurations. For each condition, we used Algorithm~\ref{alg:sampling-algorithm} to generate 20 trajectories around the course segment for the first obstacle. Our baseline comparison uses the median value for each parameter when calculating the risk field. To simulate the condition using the low and high values of a parameter, we used the 5th and 95th percentiles of the parameter, respectively, leaving the remainder of the parameters at their median level (see Table~\ref{tab:1-obs-params}). We used a preview time of $\delta_p = 1.2$ as in the previous section.

\begin{figure*}[t]
    \centering
    \includegraphics[width=8cm]{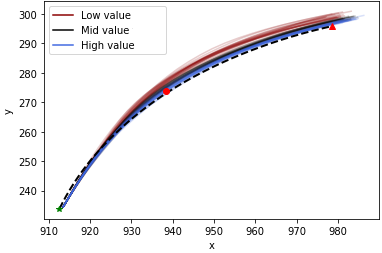}
    \includegraphics[width=8cm]{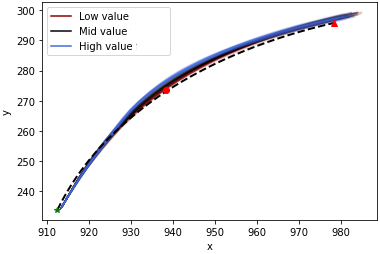}
    \includegraphics[width=8cm]{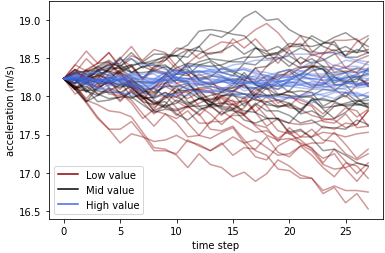}
    \includegraphics[width=8cm]{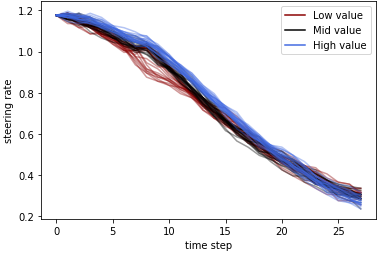}
    \caption{Simulated trajectories using extreme low ($5^{th}$ percentile) and high values ($95^{th}$ percentile) for the parameters. From left to right, top to bottom; (a) parameter $A$: center line deviation risk weightage; (b) parameter $B$: obstacle avoidance risk weightage; (c) parameter $D$: cost for acceleration; (d) parameter $E$: cost for turning rate control.}
    \label{fig:vary-params}
\end{figure*}

Figure \ref{fig:vary-params} shows the differences as we vary parameters from the baseline. In \ref{fig:vary-params}a we see that high values for $A$ lead to trajectories very close to the centerline while low values for $A$ stray farther from the centerline. While \ref{fig:vary-params}b shows more consistent trajectories between conditions, higher values for $B$ leave a higher margin when passing the obstacle compared to the lower value condition.

Since parameter $D$ impacts user controls, we visualize the acceleration under the different conditions rather than the physical position. In Figure \ref{fig:vary-params}c we see that high values of $D$ lead to much more consistent acceleration compared to the low values. For parameter $E$, in \ref{fig:vary-params}d low value trajectories show a sharper decrease in steering rate after 5 time steps compared to the baseline and high level trajectories.

Overall, we conclude that varying the risk model parameters has the expected change in the trajectories. For instance, increasing $B$ causes the trajectories to clear the obstacle with a much larger safety margin. Increasing $A$ on the other hand has the opposite effect of bringing trajectories closer to the centerline.

\section{Discussion and Future Work}
\label{sec:Discussion}

In this paper, we have presented an approach to model control choices of the human driver by quantifying the risk and showing how the risk model can be inferred from data. We have also demonstrated our approach on actual human driving data from a medium-fidelity simulation environment showing that our models can accurately predict future positions and generate qualitatively different driving behaviors. In particular, we show that deviation of generated trajectories from the human trajectory remains relatively stable over time periods up to 20 seconds into the future.

The main area for improvement is that our model currently does not capture how human operators control the velocity. We plan to improve this aspect of our model in our future work. For example, we can consider more complex representations of risk and cost beyond the simple quadratic model presented here. Additionally, a driver's choice of velocity may depend on other factors such as their confidence in driving or the overall level of risk of the current situation. The fact that our models had very small values for the $C $ parameter that measures velocity deviations from intended target  indicates that human driver behavior during the task may have been influenced by factors different from risk. While our model was defined to maintain a predefined velocity as stated in the task instructions, we observed that the drivers themselves did not adhere to this requirement.

The main result of this paper shows that by using this risk model framework with simple models for risk and control cost, we are able to generate distinct driver behaviors such as obstacle avoidance and keeping to the center of the lane. Using real driver data collected in a simulation environment, we have also shown that we can extract unique parameters that characterize individual driver behavior. Future work should investigate how accurately these models track more complex human behavior over time. Additionally, this model can be used as part of a predictive run-time monitoring system, where the goal is to predict impending violations of safety property (i.e., colliding with an obstacle) ahead of time. This system could be integrated into future driver safety interfaces and be used to study potential handover protocols with autonomous driving subsystems.

At a general level, this framework can be used to model a variety of scenarios and adapted to test hypotheses about human operator behavior. As discussed above, we noted that the participants in the study did not maintain the target velocity given in the task instructions. When fitting the risk model parameters, this behavior was indicated by the fact that the fitted values for parameter $C$ were heavily skewed to 0, indicating no effect. Future work can systematically test different forms of the risk function to see which is a better fit for human behavior. This framework may also be adapted in an attempt to infer the human's true reward function during the driving task.

In the future, we also plan to build on this framework in order to address more dynamic scenarios involving multiple vehicles and moving obstacles. Finally, we plan to address more complex task requirements in our framework.







\bibliographystyle{IEEEtran}
\bibliography{refs}

\end{document}